\definecolor{codegreen}{rgb}{0,0.6,0}
\definecolor{codegray}{rgb}{0.5,0.5,0.5}
\definecolor{codepurple}{rgb}{0.58,0,0.82}
\definecolor{backcolour}{rgb}{0.95,0.95,0.92}
\lstdefinestyle{mystyle}{
  backgroundcolor=\color{backcolour},   commentstyle=\color{codegreen},
  keywordstyle=\color{magenta},
  numberstyle=\tiny\color{codegray},
  stringstyle=\color{codepurple},
  basicstyle=\ttfamily\footnotesize,
  breakatwhitespace=false,         
  breaklines=true,                 
  captionpos=b,                    
  keepspaces=true,                 
  numbers=left,                    
  numbersep=5pt,                  
  showspaces=false,                
  showstringspaces=false,
  showtabs=false,                  
  tabsize=2
}
\newcommand\nocaption{%
    \renewcommand\p@subfigure{}
    \renewcommand\thesubfigure{\thefigure\alph{subfigure}}
}
\newtheorem{theorem}{Theorem}
\newtheorem{definition}[theorem]{Definition}
\title{
Flowification: Everything is a Normalizing Flow}
\author{%
  Bálint Máté\thanks{Equal contribution.} \\
  University of Geneva\\
  \texttt{balint.mate@unige.ch} \\
  \And
  Samuel Klein$^{*}$ \\
  University of Geneva\\
  \texttt{samuel.klein@unige.ch} \\
  \AND
  Tobias Golling \\
  University of Geneva\\
  \texttt{tobias.golling@unige.ch} \\
  \And
  François Fleuret \\
  University of Geneva\\
  \texttt{francois.fleuret@unige.ch}\\
}
\begin{document}

\maketitle

\begin{abstract}
  The two key characteristics of a normalizing flow is that it is invertible (in particular, dimension preserving) and that  it monitors the amount by which it changes the likelihood of data points as samples are propagated along the network. Recently, multiple generalizations of normalizing flows have been introduced that relax these two conditions \citep{nielsen2020survae,huang2020augmented}. On the other hand, neural networks only perform a forward pass on the input, there is neither a notion of an inverse of a neural network nor is there one of its likelihood contribution. In this paper we argue that certain neural network architectures can be enriched with a stochastic inverse pass and that their likelihood contribution can be monitored in a way that they fall under the generalized notion of a normalizing flow mentioned above. We term this enrichment \emph{flowification}. 
  We prove that neural networks only containing linear layers, convolutional layers and invertible activations such as LeakyReLU can be flowified and evaluate them in the generative setting on image datasets.
\end{abstract}

\section{Introduction}

Density estimation techniques have proven effective on a wide variety of downstream tasks such as sample generation and anomaly detection~\cite{tabak2013family, papamakarios2017masked, papamakarios2021normalizing, dinh2016density, durkan2019neural, kingma2018glow}. Normalizing flows and autoregressive models perform very well at density estimation but do not easily scale to large dimensions~\cite{kingma2018glow,nalisnick2019deep,krause2021caloflow} and have to satisfy strict design constraints to ensure efficient computation of their Jacobians and inverses. Advances in other areas of machine learning cannot be utilized as flow architectures because they are not typically seen as being invertible; this restricts the application of highly optimized architectures from many domains to density estimation, and the use of the likelihood for diagnosing these architectures.

Methods using standard convolutions and residual layers for density estimation have been developed for architectures with specific properties~\cite{iResNet, iResNetsImproved, iDenseNets, finzi2019invertible}. These methods do not provide a recipe for converting general architectures into flows.
There is no known correspondence between normalizing flows and the operations defined by linear and convolutional layers.

In this paper we show that a large proportion of machine learning models can be trained as normalizing flows. The forward pass of these models remains unchanged apart from the possible addition of uncorrelated noise. To demonstrate our formulation works we apply it to fully connected layers, convolutions and residual connections.

The contributions of this paper include:
\begin{itemize}
    \item In \S \ref{sec:linear_layers} we show that linear layers induce densities as augmented normalizing flows \cite{huang2020augmented} with the multi-scale architecture used in RealNVPs \cite{dinh2016density}. We also show how these layers can be viewed as funnels \cite{klein2021funnels} to increase their expressivity. We term this process \emph{flowification}. 
    \item In \S \ref{sec:conv_layers} we argue that most ML architectures can be decomposed into simple building blocks that are easy to flowify. As an example, we derive the specifics for two dimensional convolutional layers and residual blocks.
    \item In \S \ref{sec:experiments} we flowify multi-layer perceptrons and convolutional networks and train them as normalizing flows using the likelihood.
    This demonstrates that models built from standard layers can be used for density estimation directly.
    \end{itemize}

\section{Background}
\label{sec:background}
\subsubsection*{Normalizing flows} 
Given a base probability density $\{p_0(z)| z \in Z\}$ and a diffeomorphism $f:X \rightarrow Z$, the pullback along $f$ induces a probability density $\{p(x)| x \in X\}$ on $X$, where the likelihood of any $x \in X$ is given by $p(x)=p_0(f(x))|\textnormal{det}(J^f_x)|$, where $J^f_x$ is the Jacobian of $f$ evaluated at $x$. Thus, the log-likelihoods of the two densities are related by an additive term, which will be referred to as the \emph{likelihood contribution} $\mathcal{V}(x, z)$~\cite{nielsen2020survae}.
Normalizing flows \citep{tabak2013family} parametrize a family $f_\theta$ of invertible functions from $X$ to $Z$. The parameters $\theta$ are then optimized to maximize the likelihood of the training data.
A lot of development has gone into constructing flexible invertible functions with easy to calculate Jacobians where both the forward and inverse passes are fast to compute \cite{dinh2016density,durkan2019neural,kingma2018glow,IAF,MAF}.

As the function $f$ must be invertible it is required to preserve the dimension of the data. This limits the expressivity of $f$ and makes it expensive to model high dimensional data distributions. To reduce these issues several works have studied dimension altering variants of flows \cite{huang2020augmented,nielsen2020survae,mflows,pmlr-v130-cunningham21a,rectangular_flows,conformal_embeddings}.

\subsubsection*{Dimension altering generalizations of normalizing flows} 
\paragraph{Reducing the dimensionality} A simple method for altering the dimension of a flow is to take the output of an intermediate layer $z'$ and partition it into two pieces $z' = \{z'_1, z'_2\}$. Multi-scale architectures~\citep{dinh2016density} match $z_2'$ directly to a base density and apply further transformations $z_1'$. Funnels~\citep{klein2021funnels} generalize this by allowing $z'_2$ to depend on $z'_1$, i.e. they work with the model $p(z')=p(f'(z'_1))p(z'_2|f'(z_1))|\det{J_{z'_1}^{f'}}|$ where the conditional distribution $p(z'_2|f'(z_1))$ is trainable. It is useful to think of these factorization schemes as dimension reducing mechanisms from $\dim(z')$ to $\dim(z'_1)$.

\paragraph{Increasing the dimensionality} Dimension increasing flow layers can improve a models flexibility, as demonstrated by augmented normalizing flows~\cite{huang2020augmented}. To increase the dimensionality, $x$ is embedded into a larger dimensional space and data independent noise $u$ is added to the embedding to obtain a distribution with support of nonzero measure. This noise addition $x \mapsto (x,u)$ is similar to dequantization~\cite{dinh2016density,dequantization_init}, but is orthogonal to the distribution of $x$ and increases its dimension from $\dim x$ to $\dim x+\dim u$.
Under an augmentation the likelihood of $x$ can be estimated using
\begin{align}
    \log p(x)&=\log \int du \, p(x,u) \\
            &=\log \int du \, \frac{p(u)p(x,u)}{p(u)} \\
            &\geq \int du \,p(u)\log \frac{p(x,u)}{p(u)} \\
            &=\mathbb{E}_{u\sim p(u)} \big[\log \frac{p(x,u)}{p(u)}\big] \\
            &=\mathbb{E}_{u\sim p(u)} \big[\log p(x,u)- \log{p(u)}\big].
\end{align}
In practice, we estimate this expectation value by sampling $u$ everytime a datapoint is passed through the network. This means the integral is estimated with a single sample as in surVAEs~\cite[]{nielsen2020survae}.

\section{Flowification}
\label{sec:flowification}

Suppose $\mathcal{A}$ is a network architecture with parameter space $\Theta$. Then for any choice of $\theta \in \Theta$ the network with parameters $\theta$ realizes a function $\mathcal{A}_\theta: \mathbb{R}^D \rightarrow \mathbb{R}^C$ for some $D$ and $C$. Similarly, a normalizing flow model $\mathcal{F}$ is a parametric distribution on some $\mathbb{R}^E$, where for any choice of $\gamma$ from their parameter space $\Gamma$ they define a density function $\mathcal{F}_\gamma$ on $\mathbb{R}^E$.
In this work we show that a large class of neural network architectures can be thought of as a flow model by constructing a map
\begin{equation}
    \Big\{ \parbox{2cm}{\centering network architectures} \Big\} \rightarrow \Big\{ \parbox{1.2cm}{\centering flow models} \Big\}.
\end{equation}

The embedding of $\mathcal{A}$ to its flowification $\mathcal{F}^{\mathcal{A}}$ results in a flow model that can realize density functions on the augmented space $\mathbb{R}^{D}\times \mathbb{R}^{N}$ for some $N\geq0$, which in turn induces a density on $\mathbb{R} ^{D}$ by integrating out the component on $\mathbb{R}^{N}$. The parameter space of $\mathcal{F}^{\mathcal{A}}$ factorises as $\Theta \times \Phi$ where $\Theta$ is the parameter space of $\mathcal{A}$ and also that of the forward pass of $\mathcal{F}^{\mathcal{A}}$, while $\Phi$ parametrises the inverse pass of $\mathcal{F}^{\mathcal{A}}$. In the simplest case $\Phi = \emptyset$, i.e. flowification does not require additional parameters. It is in this sense that we claim that a large fraction of machine learning models are normalizing flows.

\paragraph{Terminology} In what follows we work with conditional distributions such as $p(z|x)$ and it will be practical to think of them as ``stochastic functions'' $p:x\mapsto z$, that take an input $x$ and produce an output $z \sim p(z|x)$. Conversely, we think of a function $f:x\mapsto z$ as the Dirac $\delta$-distribution $f(z|x)=\delta(z-f(x))$. These definitions allow us to have a unified notation for deterministic and stochastic functions such that we can talk about them in the same language. Consequently, when we say "stochastic function", it will include deterministic functions as a corner case. Depending on whether $f$ and $f^{-1}$ are deterministic or stochastic, we talk about left, right or two-sided inverses. We will be careful to be precise about this.

\paragraph{Method} In the following we consider the standard building blocks of machine learning architectures and enrich them by defining (stochastic-)inverse functions and calculating the likelihood contribution of each layer. Treating each layer separately allows density estimation models to be built through composition~\cite{nielsen2020survae}. The stochastic inverse can use the funnel approach, which increases the parameter count, or the multi-scale approach, which does not. 
For simplicity we will only consider conditional densities in the inverse as this is more general, though it is not required.
We will refer to this process as \emph{flowification} and the enriched layers as \emph{flowified}; non-flowified layers will be called \emph{standard layers}. Flowified layers can then be seen as  simultaneously being
\begin{itemize}
    \item \textbf{Flow} layers that are invertible, their likelihood contribution is known and therefore can be used to train the model to maximize the likelihood.
    \item \textbf{Standard} layers that can be trained with losses other than the likelihood, but for which the likelihood can be calculated after this training with fixed weights in the forward direction.
\end{itemize} 

\subsection{Linear Layers}
\label{sec:linear_layers}
Let $L_{W,b}:\mathbb{R}^n\rightarrow\mathbb{R}^m$ denote the the linear layer of a neural network with parameters defined by a weight matrix $ W\in\mathbb{R}^{m\times n}$ and bias $b\in\mathbb{R}^m$. Formally, $L_{W,b}$ is defined as the affine function 
\begin{equation}
    \label{eq:linear_forward}
    x\mapsto L_{W,b}(x)\coloneqq Wx+b \qquad x \in \mathbb{R}^n.
\end{equation}

\begin{definition}
    \label{def:linear_in_expectation}
    Let $\phi(z|x):\mathbb{R}^n\rightarrow\mathbb{R}^m$ be a stochastic function. We say that $\phi$ is \textbf{linear in expectation}
    if there exists $ W\in\mathbb{R}^{m\times n}$ and $b\in\mathbb{R}^m$ such that for any $x\in\mathbb{R}^n$ the expected value of $\phi$ coincides with the application of $L_{W,b}$
    \begin{equation} 
        \mathbb{E}_{z\sim \mathcal \phi(z|x)}[z]=L_{W,b}(x).
    \end{equation}
    Similarly, we say that a stochastic function $\psi(z|x)$ is convolutional in expectation if the deterministic function $x\mapsto\mathbb{E}_{z\sim \mathcal \psi(z|x)}[z]$ is a convolutional layer.
\end{definition}
In this section we \textsl{flowify} linear layers, by which we mean we construct a pair of stochastic functions, a forward $\mathcal L(z|x):\mathbb{R}^n\rightarrow\mathbb{R}^m$ and an inverse $\mathcal L^{-1}(x|z):\mathbb{R}^m\rightarrow\mathbb{R}^n$ such that the forward is linear in expectation and is compatible with the inverse in a way that will be made precise in the following paragraphs.

\subsubsection*{SVD parametrization} To build a flowified linear layer, the first step is to parametrize the weight matrices by the singular value decomposition (SVD)\citep{https://doi.org/10.48550/arxiv.1611.09630}. This involves writing $W\in\mathbb{R}^{m\times n}$ as a product $W=V\Sigma U$, where $U \in\mathbb{R}^{n\times n}$ is orthogonal, $\Sigma \in\mathbb{R}^{m\times n}$ is diagonal and $V \in\mathbb{R}^{m\times m}$ is orthogonal. This parametrization is particularly useful for our purposes because the orthogonal transformations are easily invertible and do not contribute to the likelihood, and the non-invertible piece of the transformation is localized to $\Sigma$.

\paragraph{Parametrizing $U$ and $V$} We generate elements of the special orthogonal group $SO(d)$ by applying the matrix-exponential to elements of the Lie algebra $\mathfrak{so}(d)$ of skew-symmetric matrices. We parametrize $\mathfrak{so}(d)$ and perform gradient descent there. As the Lie-algebra is a vector space, this is significantly easier than working directly with $SO(d)$. See Appendix \ref{app:rotations} for details.

\paragraph{Parametrizing $\Sigma$} The matrix $\Sigma$ is of shape $m\times n$ containing the singular values on the main diagonal.
We ensure maximal rank of $\Sigma$, by parameterizing the logarithm of the main diagonal, this way all singular values are greater than 0.

It is important to note that this parametrization is not without loss of generality. In particular, it does not include matrices of non-maximal rank nor orientation reversing ones, where either $U \in O(n)\setminus SO(n)$ or $V \in O(m)\setminus SO(m)$. This implementation detail does not change the general perspective we provide of linear layers as normalizing flows, but instead simplifies the implementation of flowified layers.

\subsubsection*{Reducing the dimensionality} 
\begin{definition} We call the tuple $(\mathcal L(z|x), \mathcal L^{-1}(x|z))$ a \textbf{dimension decreasing flowified linear layer} if $\mathcal L$ is dimension decreasing, linear in expectation and the following conditions are satisfied
    \begin{enumerate}[label=(\roman*)]
        \item  \label{2a} The forward is deterministic, given by $\mathcal L(z|x)=L_{W,b}(x)$,
        \item  \label{2b} The layer is right-invertible, $\mathcal L \circ \mathcal L^{-1} = id_z$,
        \item  \label{2c} The likelihood contribution of $\mathcal L$ can be exactly computed.
    \end{enumerate}
\end{definition}
To flowify dimension decreasing linear layers, we define the forward function $\mathcal L$ as a standard linear layer with parameters $W$ and $b$,
\begin{equation}
    \label{eq:decreasing_forward}
    \mathcal L(z|x)=\delta(z-L_{W,b}(x)).
\end{equation}
 Since $W$ is parametrized by the SVD decomposition, $W=V\Sigma U$, we need to invert $V,U$ and $\Sigma$ separately. As $V$ and $U$ are rotations, they are invertible in the usual sense. To construct a stochastic inverse to $\Sigma$, we think of it as a funnel \cite{klein2021funnels} and use a neural network $p_{inv}((Ux)_{(m:)}|\Sigma Ux)$ that models the  $n-m$ dropped coordinates as a function of the $m$ non-dropped  coordinates. Again, this is not required to calculate the likelihood under the model, even a fixed distribution could be used, but introducing some trainable parameters significantly improves the performance of the flow that is defined by the layer. We use $\Sigma^{-1}$ to denote this stochastic inverse to $\Sigma$. The stochastic inverse function $\mathcal L^{-1}$ can then be written as
 \begin{equation}
     \mathcal L^{-1}(x|z)=U^T\circ \Sigma^{-1} \circ V^T (z-b).
 \end{equation}
Since the rotations don't contribute to the log-likelihood, the likelihood of data under a dimension decreasing flowified linear layer is 
\begin{equation}\log p(x)=\log p_{inv}((Ux)_{(m:)}|\Sigma Ux)+\log \Sigma +\log p(z),\end{equation}
where $\log \Sigma$ denotes the sum of the logarithms of the diagonal elements of $\Sigma$.

\begin{theorem}
    \label{thm:dim_decrease}
    The above choices for $\mathcal L$ and $\mathcal L^{-1}$ define a dimension decreasing flowified linear layer.
\end{theorem}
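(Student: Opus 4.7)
The plan is to verify each clause of the definition in turn, treating linearity in expectation and dimension-decreasingness as preconditions and then checking (i)--(iii). The forward map is defined as $\mathcal{L}(z\mid x)=\delta(z-L_{W,b}(x))$, which is deterministic by construction, so (i) holds and the expectation $\mathbb{E}_{z\sim\mathcal{L}(z\mid x)}[z]=L_{W,b}(x)$ is immediate, showing $\mathcal{L}$ is linear in expectation in the sense of Definition~\ref{def:linear_in_expectation}. Dimension-decreasingness is built into the SVD parametrization: in this subsection $\Sigma \in \mathbb{R}^{m\times n}$ with $m<n$, so $\mathcal{L}$ maps $\mathbb{R}^n$ to $\mathbb{R}^m$.

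For (ii), I would unfold the composition $\mathcal{L}\circ\mathcal{L}^{-1}$ starting from some $z\in\mathbb{R}^m$. The stochastic inverse samples $x = U^T\circ\Sigma^{-1}\circ V^T(z-b)$, where $\Sigma^{-1}$ augments its $m$-dimensional input $y=V^T(z-b)$ by sampling the last $n-m$ coordinates from $p_{inv}(\cdot\mid y)$ and rescaling the first $m$ by the reciprocal singular values. Applying $\mathcal{L}$ to this sample gives $Wx+b = V\Sigma U U^T\Sigma^{-1}V^T(z-b)+b = V\Sigma\Sigma^{-1}V^T(z-b)+b$. The only step with any content is that $\Sigma\circ\Sigma^{-1}$ acts as the identity on $\mathbb{R}^m$: the coordinates appended by $\Sigma^{-1}$ are precisely those projected away by $\Sigma$, and on the surviving coordinates the singular-value rescalings cancel. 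Hence the composition collapses to $V V^T(z-b)+b = z$, independently of the noise samples, so $\mathcal{L}\circ\mathcal{L}^{-1}=id_z$ as Dirac distributions.

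For (iii), I would verify the stated log-likelihood formula by applying the change-of-variables layer by layer to the SVD decomposition $W=V\Sigma U$. Since $U$ and $V$ are orthogonal, their absolute Jacobian determinants equal one and they contribute nothing to the log-likelihood. The funnel step for $\Sigma$, viewed via its stochastic inverse $\Sigma^{-1}$, contributes the conditional log-density $\log p_{inv}((Ux)_{(m:)}\mid \Sigma Ux)$ over the dropped coordinates together with the Jacobian correction $\log\Sigma$ (the sum of the logs of the diagonal entries) on the surviving coordinates; adding $\log p(z)$ from the base density and noting that everything is a sum of tractable terms shows the likelihood contribution is exactly computable.

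The main subtlety, and really the only nontrivial bookkeeping, is formalizing $\Sigma^{-1}$ as a stochastic map $\mathbb{R}^m\to\mathbb{R}^n$ so that the composition argument in (ii) and the funnel change-of-variables in (iii) are genuinely sound; once that is pinned down, both claims reduce to one-line identities. Everything else is a direct consequence of the definitions.
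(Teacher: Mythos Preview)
Your proposal is correct and matches the paper's approach: both verify (i) directly from the Dirac definition of $\mathcal L$, establish (ii) by expanding $\mathcal L\circ\mathcal L^{-1}$ via the SVD and using $UU^T=I$, $\Sigma\Sigma^{-1}=I_m$, $VV^T=I$, and deduce (iii) from the resulting layer-by-layer decomposition. The only cosmetic difference is that the paper's sketch phrases (iii) as a consequence of right-invertibility, while you spell out the funnel change-of-variables explicitly; the content is the same.
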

\begin{proof}[Sketch of proof]
    The definition of the forward pass (\ref{eq:decreasing_forward}) makes the forward pass linear in expectation and satisfies \ref{2a} by definition.
    Unpacking the definitions and decomposing $W$ into its SVD form yields right-invertibility \ref{2b} which in turn implies that the likelihood contribution can be exactly computed \ref{2c}.
\end{proof}

When the inverse density is not made to be conditional the above ideas can be visualized as a standard multi-scale flow architecture \cite{dinh2016density} as shown in Fig.~\ref{fig:svd}.

\begin{figure} 
    \nocaption
    \begin{subfigure}[t]{0.4\textwidth}
        \centering 
        \includegraphics[scale=0.8]{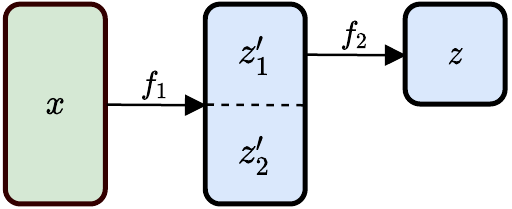}
    \end{subfigure}
    \hspace{0.15cm}
    \begin{subfigure}[t]{0.55\textwidth}
        \centering
        \includegraphics[scale=0.8]{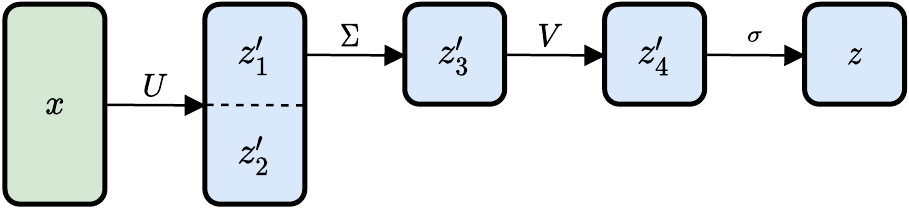}
    \end{subfigure}
    \caption{A mutli-scale flow with a base density $p(z, z'_2)$ on the left. A dimension reducing linear layer with activation $\sigma$ as a multi-scale flow with a base density $p(z, z'_2)$ on the right.}
    \label{fig:svd}
\end{figure}

\subsubsection*{Increasing the dimensionality }
\begin{definition}
    We \emph{define} the \textbf{Moore-Penrose pseudoinverse} $L_{W,b}^+$ of a linear layer $L_{W,b}:\mathbb{R}^n\rightarrow\mathbb{R}^m$ as the affine transformation $\mathbb{R}^m\rightarrow\mathbb{R}^n$ 
       \begin{equation}
        z \mapsto L_{W,b}^+ \coloneqq W^+(z-b) \qquad z \in \mathbb{R}^m
       \end{equation} where $W^+$ denotes the Moore-Penrose pseudoinverse of the matrix $W$.
    \end{definition}

\begin{definition} We call the tuple $(\mathcal L(z|x), \mathcal L^{-1}(x|z))$ a \textbf{dimension increasing flowified linear layer} if $\mathcal L$ is dimension increasing, linear in expectation and the following conditions are satisfied
    \begin{enumerate}[label=(\roman*)]
        \setcounter{enumi}{3}
        \item  \label{2d}The inverse $\mathcal L^{-1}$ is deterministic, given by $\mathcal L^{-1}(x|z)=L_{W,b}^+(z)$,
        \item  \label{2e}The layer is left-invertible, $ \mathcal L^{-1}\circ\mathcal L = id_x$,
        \item  \label{2f}The likelihood contribution of $\mathcal L$ can be bounded from below.
    \end{enumerate}
\end{definition} 

To construct dimension increasing flowified linear layers, we rely again on the SVD parametrization where the only nontrivial component is $\Sigma$. In this case $\Sigma$ is a dimension increasing operation and we think of it as an augmentation step~\cite{huang2020augmented} composed with diagonal scaling. To augment, we sample $m-n$ coordinates from a distribution $p(u)$ with zero mean and then apply a scaling in $m$ dimensions. The likelihood contribution is then given by 
\begin{equation}\log p(x)\geq\mathbb{E}_{u\sim p(u)} \big[\log p(z)- \log{p(u)}\big] + \log \Sigma,\end{equation}
\begin{equation}\log p(x)=\log \Sigma +\log p(z),\end{equation} where $\log \Sigma$ denotes the sum of the logarithms of the $m$ scaling parameters.
The inverse function $\mathcal L^{-1}$ is the composition of the inverse rotations, the inverse scaling and the dropping of the sampled coordinates. This sequence of steps is visualized in Fig. ~\ref{fig:dim_increase}.

\begin{figure*}[h!]
    \centering
    \includegraphics{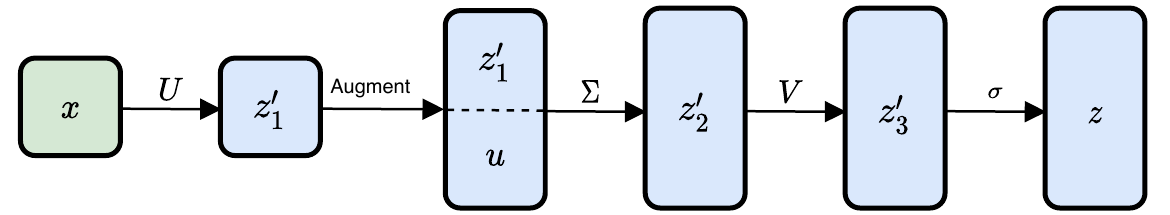}
    \caption{A dimension increasing flowified linear layer.}
    \label{fig:dim_increase}
\end{figure*}

\begin{theorem}
    \label{thm:dim_increase}
    The above choices for $\mathcal L$ and $\mathcal L^{-1}$ define a dimension increasing flowified linear layer.
\end{theorem}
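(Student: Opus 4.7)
The plan is to verify each of the six defining conditions directly. That $\mathcal L$ is dimension increasing and that $\mathcal L^{-1}$ is deterministic (condition \ref{2d}) are immediate from the constructions. For linearity in expectation, I would write the forward as $z = V\,\text{diag}(\sigma_1,\dots,\sigma_m)\begin{pmatrix}Ux\\u\end{pmatrix}+b$; taking the expectation in $u$ kills the bottom block because $p(u)$ has zero mean, and what remains is exactly $V\Sigma U x + b = L_{W,b}(x)$ via the SVD $W=V\Sigma U$.

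For left-invertibility \ref{2e}, I would use the SVD form of the pseudoinverse, $W^+=U^T\Sigma^+ V^T$, together with the fact that $\Sigma^+ = \begin{pmatrix} D^{-1} & 0 \end{pmatrix}$ where $D=\text{diag}(\sigma_1,\dots,\sigma_n)$ is invertible (the singular values are parametrized in log-space, so they are strictly positive). Plugging $\mathcal L(x)$ into $W^+(\,\cdot\,-b)$, the orthogonality $V^T V = I_m$ cancels $V^T$ against $V$, then $\Sigma^+$ reads off the first $n$ coordinates of the scaled augmented vector while discarding the rest, so the noise $u$ disappears and the first $n$ scalings cancel via $D^{-1}D=I_n$. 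A final $U^T U = I_n$ recovers $x$, independently of the sampled $u$.

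For \ref{2f}, the likelihood lower bound is a direct application of the augmentation inequality derived in Section \ref{sec:background}. Writing the joint $p(x,u)$ as the pushforward of the base density along the deterministic invertible map $\begin{pmatrix}Ux\\u\end{pmatrix}\mapsto V\,\text{diag}(\sigma_1,\dots,\sigma_m)\begin{pmatrix}Ux\\u\end{pmatrix}+b$, the change-of-variables Jacobian has log-absolute determinant $\sum_{i=1}^m \log\sigma_i$ since $U,V$ are orthogonal. Substituting into the augmentation inequality yields the stated bound, and every term can be evaluated in closed form from the output or estimated with a single draw $u\sim p(u)$.

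The main obstacle is the bookkeeping in \ref{2e}: one must correctly identify the ``$m$-dimensional diagonal scaling'' in the flowified layer with the rectangular $m\times n$ SVD matrix $\Sigma$ augmented by $m-n$ additional scale parameters acting only on the noise, and then verify that these extra parameters drop out when $W^+$ is applied, so that $\mathcal L^{-1}\circ\mathcal L$ is not only deterministic but literally equal to the identity on $\mathbb{R}^n$ for every realisation of $u$.
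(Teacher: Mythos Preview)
Your proposal is correct and follows the same route as the paper: use the zero-mean of $p(u)$ for linearity in expectation, the augmentation inequality from \S\ref{sec:background} for \ref{2f}, and the SVD form $W^+=U^T\Sigma^+V^T$ together with $V^TV=I_m$, $U^TU=I_n$ for \ref{2d} and \ref{2e}. You are in fact more careful than the paper's appendix proof in explicitly tracking the noise $u$ through the left-inverse and in flagging that the $m$-dimensional scaling contains $m-n$ extra parameters (acting only on $u$) beyond the $n$ singular values of $\Sigma$, which drop out after applying $\Sigma^+$.
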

\begin{proof}[Sketch of proof]
    The augmentation step \cite{huang2020augmented} results in a lower bound on the likelihood contribution, implying \ref{2f}. Since $\mathbb{E}_{u\sim p(u)}=0$, the augmentation does not influence the expected value of $z$, i.e. the forward pass is linear in expectation. Simple calculations using the SVD decomposition then imply both \ref{2d} and \ref{2e}.
\end{proof}

\subsubsection*{Preserving the dimensionality } Dimension preserving layers are a corner case of both of the above scenarios, where padding and sampling are not needed in either direction and the layer is non-stochastically invertible. All this implies \ref{2a},\ref{2b},\ref{2c},\ref{2d} and \ref{2e} are satisfied.
\subsection{Convolutional layers}
\label{sec:conv_layers}
Convolutions can be seen as a dimension increasing coordinate repetition followed by a matrix multiplication with weight sharing. In the previous section we derived  the specifics of matrix multiplication. We begin this section with the details of coordinate repetition after which we put the pieces together to build a flowified convolutional layer. In Appendix \ref{app:fft} we describe an alternative approach relying on the Fourier transform.

\textbf{Repeating coordinates} In this paragraph we focus on the $N$-fold repetition of a single scalar coordinate $x$. This significantly simplifies notation, but the technique generalizes in an obvious way. Intuitively, the idea is to expand the one dimensional volume to $N$ dimensions by first embedding and then increasing the volume of the embedding such that the volume in the $N-1$ directions complementary to the embedding can be controlled.

We have seen in \S \ref{sec:background} that the operation
\begin{equation}
     x \mapsto (x,\mathbf{\underline{u}}) \qquad \mathbf{\underline{u}}=(u_1,...,u_{N-1}),
\end{equation}

has likelihood contribution $\mathbb{E}_{\mathbf{\underline{u}}}[-\log p(\mathbf{\underline{u}})]$. Now, we can apply any $N$-dimensional rotation $R_N$ which maps $(1, \mathbf{\underline{0}})$ to $\frac{1}{\sqrt{N}}(1, \mathbf{\underline{1}})$ to obtain\footnote{$\mathbf{\underline{0}}$ and $\mathbf{\underline{1}}$ denote the $(N-1)$-dimensional vectors $(0,...0)$ and $(1,...,1)$, respectively. Similarly $\mathbf{\underline{x}}$ denotes the $(N-1)$-dimensional vector $(x,...,x)$.}
\begin{equation}
    R_N(x,\mathbf{\underline{u}})=R_N(x,\mathbf{\underline{0}})+R_N(0,\mathbf{\underline{u}})=\frac{1}{\sqrt{N}}(x,\mathbf{\underline{x}})+R_N(0,\mathbf{\underline{u}}).
\end{equation}
Note that this rotation does not contribute to the likelihood. Finally, we apply a diagonal scaling in $N$ dimensions with factor $\sqrt{N}$ such that
\begin{equation}
    \label{eq:repetition}
    x \mapsto (x,\mathbf{\underline{x}})+R_N(0,\sqrt{N}\mathbf{\underline{u}}),
\end{equation}
where the final scaling has likelihood contribution $N \log(\sqrt{N})=(N/2) \log N$ and $x$ is now repeated $N$ times. The overall contribution to the likelihood of the embedding (\ref{eq:repetition}) is 
\begin{equation}
    \label{eq:repetion_likelihood}
    \mathcal{V}(x, z) = \mathbb{E}_{\mathbf{\underline{u}}}[-\log p(\mathbf{\underline{u}})]+(N/2) \log N.
\end{equation}
By construction, the padding distribution $R_N(0,\sqrt{N}\mathbf{\underline{u}})$ is orthogonal to the diagonal embedding $x \mapsto (x,...,x)$ of the data distribution. 
The inverse function is given by the projection to the diagonal embedding, 
\begin{equation}
    \label{eq:repetion_inverse}
    (z_1,...,z_N) \mapsto \tfrac{1}{N}\sum_i z_i.
\end{equation}

\paragraph{General architectures}
Now that the likelihood contribution of arbitrary linear layers and coordinate repetition has been computed it is possible to flowify more general architectures such as convolutions and residual connections. It is important to note that just because an architecture works well for certain tasks, it is not clear if its flowified version will perform well at density estimation. 

\paragraph{Decomposing convolutional layers}
To flowify convolutional layers, we decompose it as a sequence of building blocks that are easy to flowify separately. A standard convolutional layer performs the following sequence of steps:

\begin{enumerate}
    \item \textbf{Padding} of the input image with zeros to increase its size.
    \item \textbf{Unfolding} of the padded image into tiles. This step replicates the data according to the kernel size and stride. 
    \item \textbf{Applying a linear layer.} Finally, we apply the same linear layer to each of the tiles produced in the previous step. The outputs then correspond to the pixels of the output image.
\end{enumerate}

\begin{minipage}[t]{0.55\textwidth}
    \paragraph{Flowification}
    Steps 1 and 3 are already flowified, i.e. their likelihood contribution is computed and an inverse is constructed, in \S \ref{sec:linear_layers}. We denote their flowification with  ${\operatorname{Pad}}$ and  ${\operatorname{Linear}}$, respectively. Step 2 fits into the discussion of the previous paragraph of repeating coordinates, where both its inverse (\ref{eq:repetion_inverse}) and its likelihood contribution (\ref{eq:repetion_likelihood}) are given. We will denote this operation by ${\operatorname{Unfold}}$.

    \begin{definition}
    Let $\operatorname{Linear}, \operatorname{Unfold}$ and $\operatorname{Pad}$ be as above and define $\mathcal C$ and $\mathcal C^{-1}$ be the following stochastic functions
    \begin{align}
        \mathcal C&=\operatorname{Linear} \circ \operatorname{Unfold} \circ \operatorname{Pad} \\
        \mathcal C^{-1}&= \operatorname{Pad}^{-1}\circ \operatorname{Unfold}^{-1} \circ \operatorname{Linear} ^{-1}
    \end{align}
    call the resulting layer $(\mathcal C, \mathcal C^{-1})$ a \textbf{flowified convolutional layer}.
    \end{definition}

    \end{minipage}
    \hfill
    \begin{minipage}[t]{0.4\textwidth}
            \raisebox{\dimexpr-\height+\ht\strutbox}{
            \includegraphics[width=\textwidth]{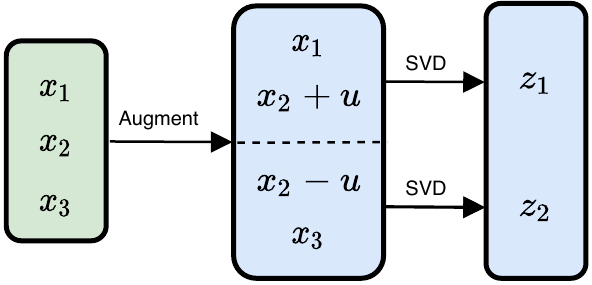}
            }
            \captionof{figure}{\label{fig:convolutions_explained}A flowified 1D-convolution with kernel size 2 applied to a vector with 3 features. The $x_2$ component appears in the operation twice, and so it is first duplicated so that the kernel can be applied to non-overlapping tiles.}
           
    \end{minipage}

    A flowified convolutional layer $(\mathcal C, \mathcal C^{-1})$ is then \emph{convolutional in expectation} (Definition \ref{def:linear_in_expectation}), i.e. there exists a convolutional layer $C_\theta$ with parameters $\theta$ such that 
    \begin{equation} \mathbb{E}_{z\sim \mathcal C(z|x)}[z]=C_\theta(x). \end{equation}

    The flowification of a convolution without padding can be seen in Fig.~\ref{fig:convolutions_explained}. The ${\operatorname{Unfold}}$ operation is implemented as coordinate duplication and ${\operatorname{Linear}}$ is a flowified linear layer parameterized by the SVD.

\paragraph{Activation functions}
Functions that are surjective onto $\mathbb{R}$ and invertible fit well in our framework as they can be used out of the box without any modifications.
In our experiments we use LeakyReLU and rational-quadratic splines \cite{durkan2019neural} as activations functions. Non-invertible activations can also be used when equipped with additional densities~\cite{nielsen2020survae}.

\paragraph{Residual connections} 
Residual connections can be seen as coordinate duplication followed by two separate computational graphs $\{f_1, f_2\}$ with the  outputs recombined in a sum. The sum can be inverted by defining a density over one of the summands $p(f_1(x + u) | f_1(x + u), f_2(x-u))$ and sampling from this density, which will also define the likelihood contribution. Then, if the likelihood contribution can be calculated for each individual computational graph, the likelihood of the total operation can be calculated~\cite{nielsen2020survae}.

\section{Experiments}
\label{sec:experiments}

To test the constructions described in the previous section we flowify multilayer perceptrons and convolutional architectures and train them to maximize the likelihood of different datasets.

\paragraph*{Tabular data}
In this section we study a selection of UCI datasets~\cite{uci_datasets} and the BSDS$300$ collection of natural images~\cite{MartinFTM01} using the preprocessed dataset used by masked autoregressive flows~\cite{MAF, papamakarios_george_2018_1161203}. We compare the performance with several baselines for comparison in Table~\ref{tab:tablular_data_results}. We see that the flowified models have the right order of magnitude for the likelihood but are not competitive.

\begin{table*}[h]
    \centering
    \caption{Test log likelihood (in nats, higher is better) for UCI datasets and BSDS300, with error bars corresponding to two standard deviations. 
    }
    \resizebox{\textwidth}{!}{
    \begin{tabular}{lccccc}
        \toprule \textsc{Model} & \textsc{POWER} & \textsc{GAS} & \textsc{HEPMASS} & \textsc{MINBOONE} & \textsc{BSDS$300$}\\
        \cmidrule(r){1-6} \textsc{Glow} & ${0.38 \pm 0.01}$ & $12.02 \pm 0.02$ & ${-17.22 \pm 0.02}$ & $-10.65 \pm 0.45$ & ${156.96 \pm 0.28}$ \\
        \textsc{NSF} & ${0.63 \pm 0.01}$ & ${13.02 \pm 0.02}$ & ${-14.92 \pm 0.02}$ & $-9.58 \pm 0.48$ & ${157.61 \pm 0.28}$ \\
        \cmidrule(r){1-6} \textsc{fMLP} & ${-0.50 \pm 0.02}$ & ${5.35 \pm 0.02}$ & $-19.56 \pm 0.04$ & $-14.05 \pm 0.48$ & $144.22 \pm 0.28$ \\
        \bottomrule
    \end{tabular}
    }
    \label{tab:tablular_data_results}
\end{table*}

\paragraph*{Image Data}
In this section we use the MNIST \citep{lecun-mnisthandwrittendigit-2010} and CIFAR10 \citep{Krizhevsky2009MastersThesis} datasets  with the standard training and test splits. The data is uniformly dequantized as required to train on image data~\cite{flow_plus_plus,Theis}. For both datasets we trained networks consisting only of flowified linear layers (\textsc{fMLP}) and also networks consisting of convolutional layers followed by dense layers (\textsc{fConv1}). To minimize the number of augmentation steps that occur in each model we define additional architectures with similar numbers of parameters but with non-overlapping kernels in the convolutional layers (\textsc{fConv2}). The exact architectures can be found in Appendix \ref{app:architectures}. The flowified layers sample from $\mathcal N(0,a)$ for dimension increasing operations, where $a$ is a per-layer trainable parameter. We use rational quadratic splines~\cite{durkan2019neural} with $8$ knots and a tail bound of $2$ as activation functions, where the same function is applied per output node.  
We also ran experiments with coupling layers using rational quadratic splines \cite{durkan2019neural} mixed into \textsc{fConv2}, in all other cases the parameters of the model are not data-dependent operations. 
The improved performance of these models suggests that flowified layers do not mismodel the density of the data, but they do lack the capacity to model it well. Samples from these models are shown in Appendix \ref{app:nsf_samples}.

\begingroup
\setlength{\tabcolsep}{6pt}
\renewcommand{\arraystretch}{1}
\begin{table*}[h!]
    \centering
    \caption{Test-set bits per dimension (BPD) for MNIST and CIFAR-$10$ models, lower is better. Results from several other works were included for comparison. Flowified models with overlapping kernels \textsc{fConv1} and non-overlapping kernels \textsc{fConv2} are shown, with a similar parameter budget to the neural spline flow~\cite{durkan2019neural}. The models \textsc{fConv1 + NSF} and \textsc{fConv2 + NSF} correspond to architectures using rational quadratic spline layers in-between the flowified layers of \textsc{fConv1} and \textsc{fConv2}, respectively. Samples from these models can be found in Appendix \ref{app:nsf_samples}.}
    \begin{tabular}{lcc}
        \toprule \textsc{Model} &  \textsc{MNIST} & \textsc{CIFAR-$10$}\\
        \cmidrule(r){1-1} \cmidrule(r){2-3}
        \textsc{Glow}~\cite{kingma2018glow} & $1.05$ & $3.35$ \\
        \textsc{RealNVP}~\cite{dinh2016density} & - & $3.49$ \\
        \textsc{NSF}~\cite{durkan2019neural} & - & $3.38$ \\
        \textsc{i-ResNet}~\cite{iResNet} & $1.06$ & $3.45$ \\
        \textsc{i-ConvNet}~\cite{finzi2019invertible} & & $4.61$ \\
        \textsc{MAF}~\cite{MAF} & $1.91$ & $4.31$ \\
        \cmidrule(r){1-3} 
        \textsc{fMLP} & $4.19$ & $5.45$ \\
        \textsc{fConv1} & $3.11$ & $4.91$ \\
        \textsc{fConv2} & $1.41$ & $4.20$ \\
        \textsc{fConv1 + NSF} & $2.70$ & $3.93$ \\
        \textsc{fConv2 + NSF} & $1.35$ & $3.69$ \\
        \bottomrule
    \end{tabular}
    \label{tab:image_data_results}
\end{table*}
\endgroup

The results of the density modelling can be seen in Table.~\ref{tab:image_data_results}. The images in the left column of Fig. ~\ref{fig:samples} are generated by sampling from a standard gaussian in the latent space and taking the expected output of the inverse in every layer. The images in the right column use the same latent samples as the left column but also sample from the distribution defined by the inverse pass of the layers.

As seen in Fig.~\ref{tab:image_data_results} the \textsc{fMLP} models are outperformed by the  \textsc{fConv} models and the convolutional models with non-overlapping kernels achieve better results than the ones with overlapping kernels. This suggests both that the inductive bias of the convolution is useful for modelling distributions of images and that the augmentation step costs more in terms of likelihood than it provides in terms of increased expressivity.

\begin{figure*}[h!]
    \centering
    \begin{subfigure}[b]{0.475\textwidth}
        \centering
        \includegraphics[trim=60 60 60 275,clip,width=\textwidth]{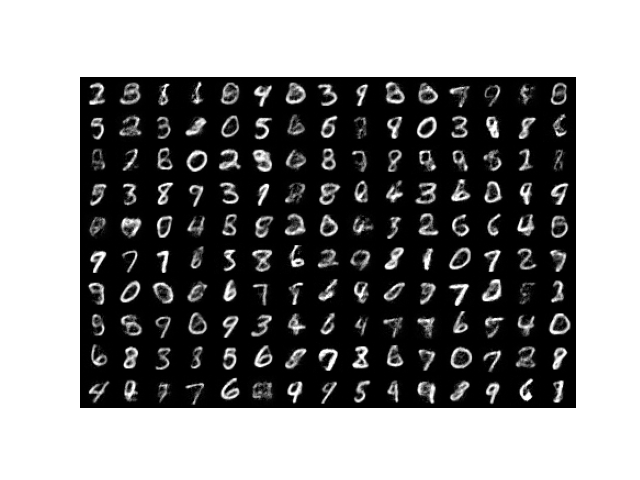}
        \label{}
    \end{subfigure}
    \hfill
    \begin{subfigure}[b]{0.475\textwidth}  
        \centering 
        \includegraphics[trim=60 60 60 275,clip,width=\textwidth]{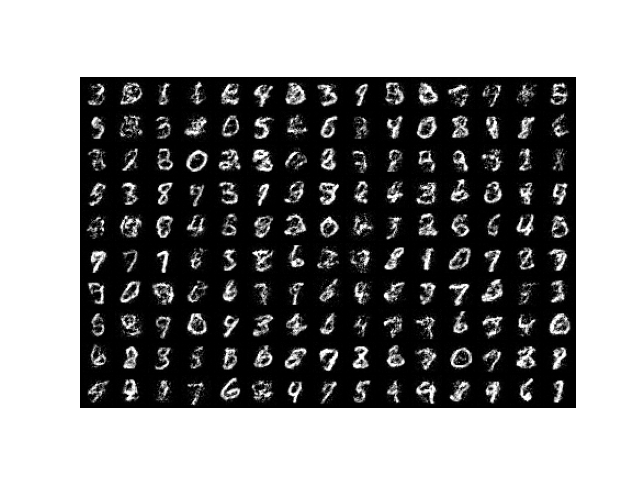}
        \label{}
    \end{subfigure}
    \vskip-\baselineskip
    \begin{subfigure}[b]{0.475\textwidth}   
        \centering 
        \includegraphics[trim=60 60 60 275,clip,width=\textwidth]{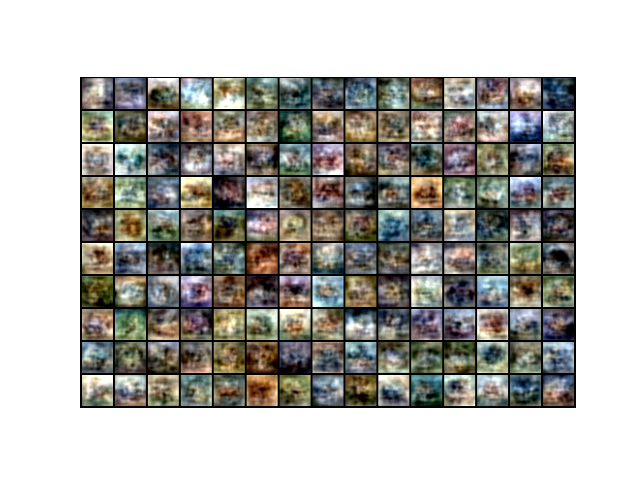}
        \label{}
    \end{subfigure}
    \hfill
    \begin{subfigure}[b]{0.475\textwidth}   
        \centering 
        \includegraphics[trim=60 60 60 275,clip,width=\textwidth]{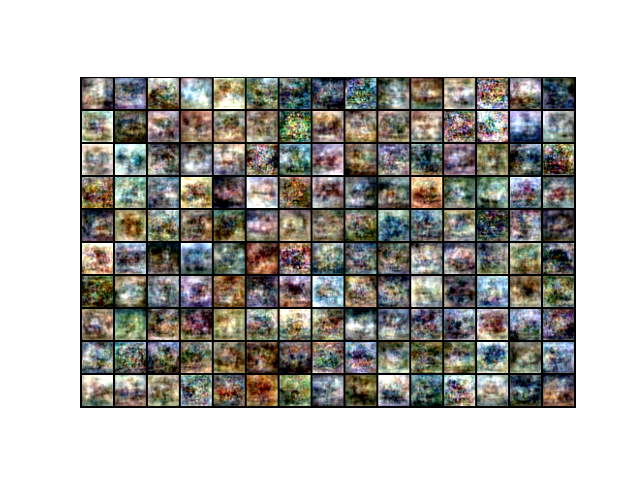}
        \label{}
    \end{subfigure}
    \vskip-\baselineskip
    \begin{subfigure}[b]{0.475\textwidth}
        \centering
        \includegraphics[trim=60 60 60 275,clip,width=\textwidth]{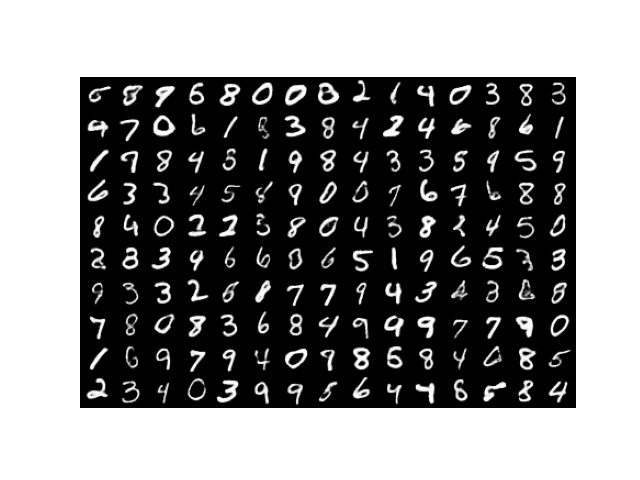}
        \label{}
    \end{subfigure}
    \hfill
    \begin{subfigure}[b]{0.475\textwidth}  
        \centering 
        \includegraphics[trim=60 60 60 275,clip,width=\textwidth]{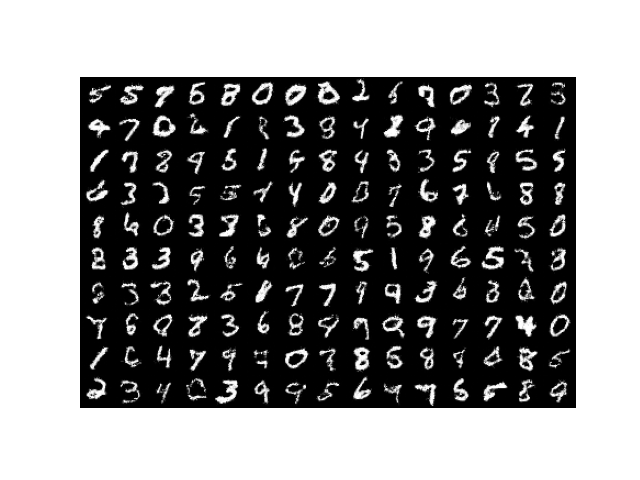}
        \label{}
    \end{subfigure}
    \vskip-\baselineskip
    \begin{subfigure}[b]{0.475\textwidth}   
        \centering 
        \includegraphics[trim=60 60 60 275,clip,width=\textwidth]{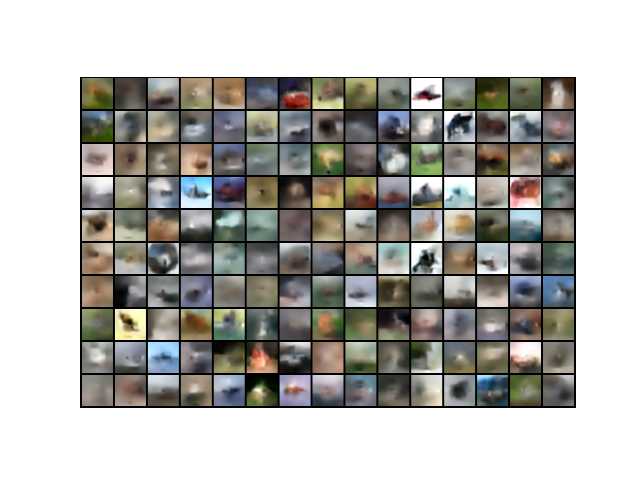}
        \label{}
    \end{subfigure}
    \hfill
    \begin{subfigure}[b]{0.475\textwidth}   
        \centering 
        \includegraphics[trim=60 60 60 275,clip,width=\textwidth]{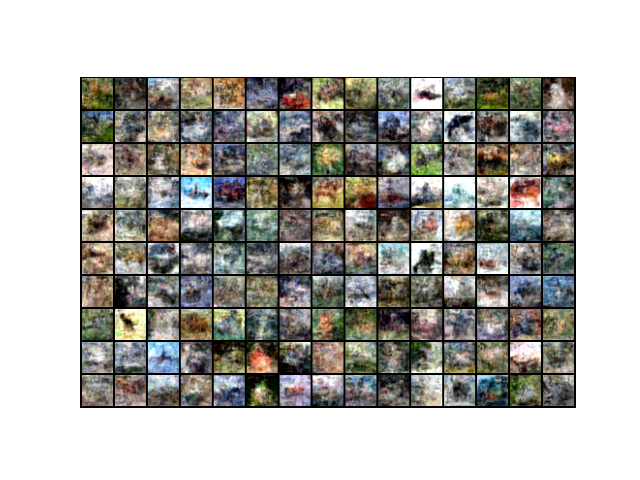}
        \label{}
    \end{subfigure}
    \vskip-\baselineskip
    \begin{subfigure}[b]{0.475\textwidth}
        \centering
        \includegraphics[trim=60 60 60 275,clip,width=\textwidth]{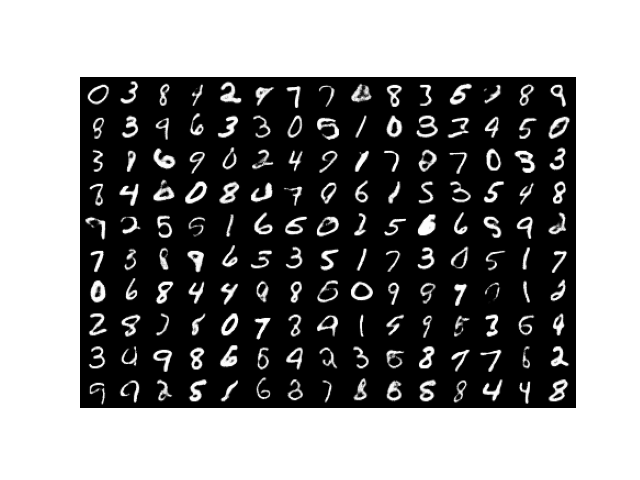}
        \label{}
    \end{subfigure}
    \hfill
    \begin{subfigure}[b]{0.475\textwidth}  
        \centering 
        \includegraphics[trim=60 60 60 275,clip,width=\textwidth]{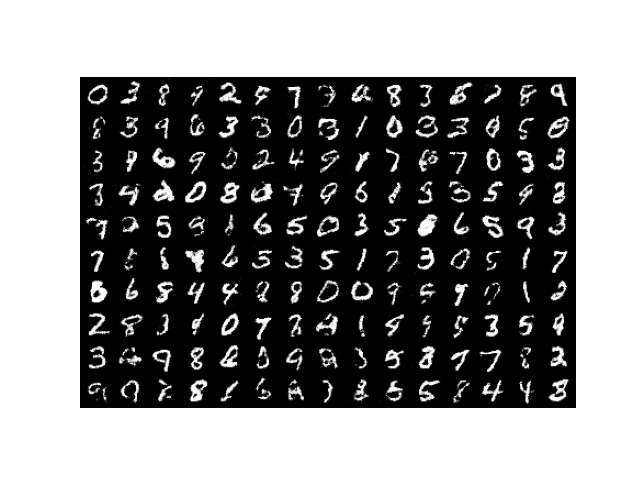}
        \label{}
    \end{subfigure}
    \vskip-\baselineskip
    \begin{subfigure}[b]{0.475\textwidth}   
        \centering 
        \includegraphics[trim=60 60 60 275,clip,width=\textwidth]{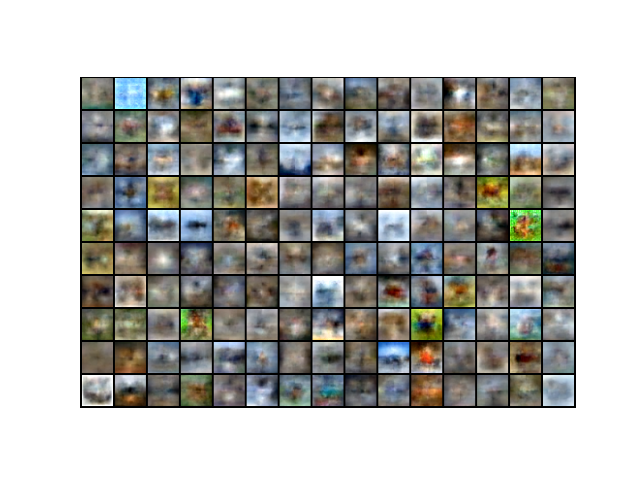}
        \label{}
    \end{subfigure}
    \hfill
    \begin{subfigure}[b]{0.475\textwidth}   
        \centering 
        \includegraphics[trim=60 60 60 275,clip,width=\textwidth]{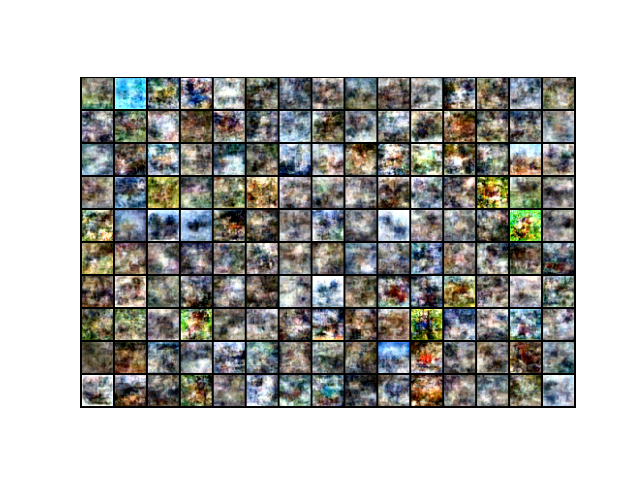}
        \label{}
    \end{subfigure}
    \caption{Samples from flowified multilayer perceptrons (top two rows) convolutional networks with overlapping (third and fourth rows) and non-overlapping (bottom two rows) kernels trained on MNIST and CIFAR-$10$. Samples where the mean is used to invert the SVD in the inverse pass (left column). Samples generated by drawing from the inverse density to invert the SVD in the inverse pass (right column).More samples from these models can be found in Appendix \ref{app:mlp_samples}.}
    \label{fig:samples}
\end{figure*}
\section{Related Work}

Several works have developed methods that allow standard layers to be made invertible, but these approaches restrict the space of the models, whereas we consider networks in their full generality. Invertible ResNets~\cite{iResNet, iResNetsImproved} require that each residual block has a Lipschitz constant less than one, but even with this restriction they attain competitive results on both classification and density estimation. The same Lipschitz constraint can also be applied to other networks \cite{iDenseNets}. In these architectures the multi-scale architecture used in RealNVPs \cite{dinh2016density} is not leveraged, and so no information is discarded by the model. It is unclear why this approach outperforms flowified layers, as seen in Table.~\ref{tab:image_data_results}, but it could be due to the preservation of information through the model, the very large number of parameters that are used in these approaches, the restricted subspace of the models, or some combination of these three.

It can also be shown that convolutions with the same number of input and output channels can be made invertible~\cite{finzi2019invertible}. These layers perform poorly at the task of density estimation and are outperformed by flowified layers. This is likely due to the increased expressivity that comes from considering a larger space of architectures. There have been several works that develop convolution inspired invertible transformations \cite{karami2019invertible, hoogeboom2019emerging, matrix_exp_huge}, but these architectures consider restricted transformations to maintain invertibility.
\section{Future Work and Conlusion}
\label{sec:conclusions}

Our experiments suggest that flowified convolutional networks do not match the density estimation performance of similarly sized normalizing flows. A possible explanation is that the dimension reducing steps discard information and more expressive encoding layers are necessary to transform the distributions before reducing the dimensionality. 
This is supported by the experiments using NSF layers in-between the flowified layers (see App. \ref{app:nsf_samples}). The addition of NSF layers leads to improved performance both in terms of visual quality and also BPD values.  
Possibly the main limitation of a network  consisting purely of flowified layers is the fact that, unlike flow layers, the forward passes of standard linear and convolutional layers are not data-dependent. This is reinforced by the fact that the entanglement capability typically used in flows also appears in attention mechanisms, which have been shown to excel at capturing complex statistical structures \cite{radford2019language, brown2020language, vision_transformer}.

With further development -- such as increased capacity given to the inverse density, or data dependent parameters in the forward pass -- standard architectures could become competitive density estimators in their own right and allow for general purpose models to be developed. The focus of this work was on employing standard layers for density estimation, but it is possible that designing data dependent variants of standard layers that are more flow-like could improve their performance on tasks such as classification and regression. The flowification procedure provides a useful means for designing such models, and demonstrates that standard architectures can be considered a subset of normalizing flows, a correspondence that has not previously been demonstrated. 

The code for reproducing our experiments is available under MIT license at \href{https://github.com/balintmate/flowification}{\texttt{https://github.com/balintmate/flowification}}.
    
\section{Acknowledgement}
The authors would like to acknowledge funding through the SNSF Sinergia grant called Robust Deep Density Models for High-Energy Particle Physics and Solar Flare Analysis (RODEM) with funding number CRSII$5\_193716$.

\bibliographystyle{unsrtnat}
\bibliography{bib}
\newpage
\appendix

\section{SurVAE}

In this section we describe how the flowification procedure fits into the surVAE framework~\cite{nielsen2020survae}.
The surVAE framework can be used to compose bijective and surjective transformations such that the likelihood of the composition can be calculated exactly or bounded from below.
To define a likelihood for surjections both a forward and an inverse transformation is defined, where one of the directions is necessarily stochastic.

A transformation that is stochastic in the forward direction is called a generative surjection, and the likelihood of such transformations can only be bounded from below. Flowified dimension increasing linear layers, residual connections, and coordinate repetition are examples of generative surjections. A transformation that is stochastic in the inverse direction is called an inference surjection, where the likelihood can be calculated exactly if the layer satisfies the right inverse condition~\cite{nielsen2020survae}. A flowified dimension reducing linear layer is an inference surjection that meets the right inverse condition.


\paragraph*{The augmentation gap}
Dimension increasing operations (generative surjections) introduce an augmentation gap in the likelihood calculations as derived by \citet{huang2020augmented}. After this augmentation step the subsequent layers model the joint distribution $p(x,u)$ of the data $x$ and the augmenting noise $u$. In this sense, the dimension increasing operations of this work are only normalizing flows to the degree to which augmented normalizing flows are normalizing flows. The way we perform convolutions (pad, unfold, apply kernel) requires a large number of dimension expanding steps (unfold) and it is inevitable to work with this (weakened) version of flows. If a convolutional layer is such that the dimensionality (channels times height times width) of its input is less than or equal than that of its output, the overall convolution operation is still in the dimension reducing regime, theoretically allowing to compute the exact likelihood. We investigated this approach by diagonalizing the convolution using the Fourier transform, which unfortunately turned out to be too expensive computationally to be used in practice. See Appendix \ref{app:fft} for details.

\clearpage
\pagebreak
\section{Padding in dimension increasing operations}

When flowifying a dimension increasing operation the choice of noise to use in the augmentation~\cite{huang2020augmented} plays an important role in the performance of the model. In this section we compare sampling from a uniform distribution to sampling from a normal distribution. In all cases we observe that sampling from a normal distribution is significantly better than sampling from a uniform distribution as seen in Table \ref{tab:noise_results}. This is likely due to the normal distribution inverse that is used as the inverse in the dimension reducing layers as well as the normal base distribution.


\begingroup
\setlength{\tabcolsep}{6pt}
\renewcommand{\arraystretch}{1}
\begin{table*}[h!]
    \centering
    \caption{Test-set bits per dimension (BPD) for MNIST and CIFAR-$10$ models, lower is better. Comparing noise sampled from a normal and a uniform distribution for \textsc{fConv1}.}
    \begin{tabular}{lcc}
        \toprule \textsc{Noise} &  \textsc{MNIST} & \textsc{CIFAR-$10$}\\
        \cmidrule(r){1-1} \cmidrule(r){2-3}
        \textsc{Uniform} & $3.41$ & $6.24$ \\
        \cmidrule(r){2-3}
        \textsc{Normal} & $3.11$ & $4.91$ \\
        \bottomrule
    \end{tabular}
    \label{tab:noise_results}
\end{table*}
\endgroup

\subsection*{Implementation details}
The flowification of convolutional layers involves an unfolding step as explained in \S \ref{sec:conv_layers}. 
This procedure entails the replication of all pixels values as many times as the number of patches the given pixel is contained in and the additional of orthogonal noise. 
Our implementation relies on the \texttt{Fold} and \texttt{Unfold} operations of PyTorch and its essence is summarised in the following piece of pseudocode,
\begin{figure}[h!]
    \lstset{language=Python}
    \lstset{frame=lines}
    
    \begin{lstlisting}
#Let X be the input image
# unfolding is a diagonal embedding for each pixel p -> (p,p, ...,p).
patches = unfold(X) 

# generate the orthogonal noise in two steps
### 1. Uniformly sample the direction from the  ###
### (N-1)-sphere  orthogonal to the diagonal    ###
u_dir = torch.randn(patches.shape)
u_dir -= unfold(fold(u_dir,aggr="mean")   # center per input pixel
u_dir /= unfold(fold(u_dir**2, aggr="sum")# normalize per input pixel

### 2. Sample the amplitude to make the  ###
### overall sample uniform on the N-ball ###
N = fold(unfold(torch.ones(X.shape))) # multiplicities
u_amp = torch.rand(X.shape)
u_amp = unfold(u_amp)** (1 / (N - 1)))

# the orthogonal noise is then the product
u = u_dir * u_amp
\end{lstlisting}
    \caption{Pseudocode for the orthogonal noise generation in Convolutinal networks. To sample the $N-1$ dimensional orthogonal noise from a normal, the uniform sampling in line 15 needs to replaced with sampling from the $\chi^2$ distribution with $(N-1)$-many degrees of freedom. The likelihood contribution is given by Eq. ~\ref{eq:repetion_likelihood}, which in this case is then calculated from the tensor $N$ and the volumes of $N$ dimensional balls.}
\end{figure}
\clearpage
\pagebreak
\section{Flowification + NSF}
\label{app:nsf_samples}
Below are samples from an architecture combining flowified standard layers and NSF layers.

\begin{figure*}[h]
    \centering
    \begin{subfigure}[b]{0.475\textwidth}
        \centering
        \includegraphics[trim=60 60 60 60,clip,width=\textwidth]{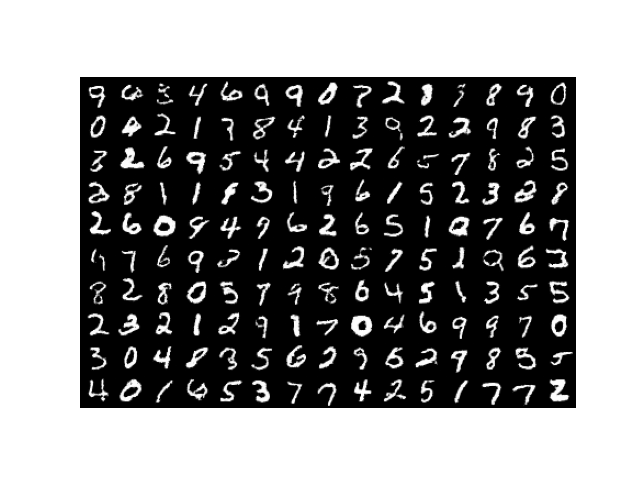}
        \caption{Flowified convnets with non-overlapping kernels  combined with NSF layers on MNIST, BPD 1.35}
    \end{subfigure}
    \hfill
    \begin{subfigure}[b]{0.475\textwidth}  
        \centering 
        \includegraphics[trim=60 60 60 60,clip,width=\textwidth]{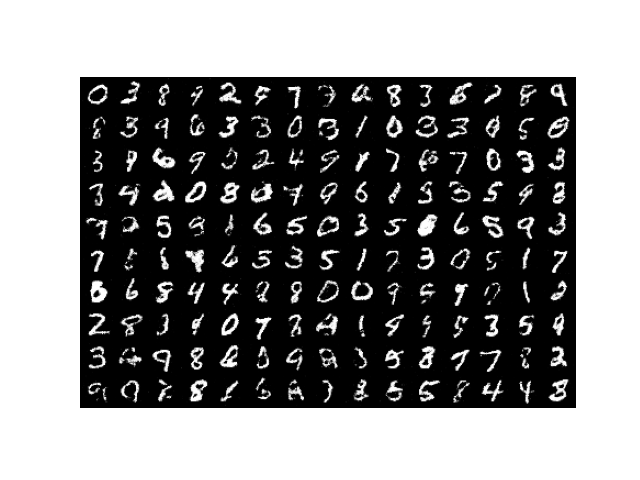}
        \caption{Flowified convnets with overlapping kernels combined with NSF layers on MNIST, BPD 2.70}
    \end{subfigure}

    \begin{subfigure}[b]{0.475\textwidth}
        \centering
        \includegraphics[trim=60 60 60 60,clip,width=\textwidth]{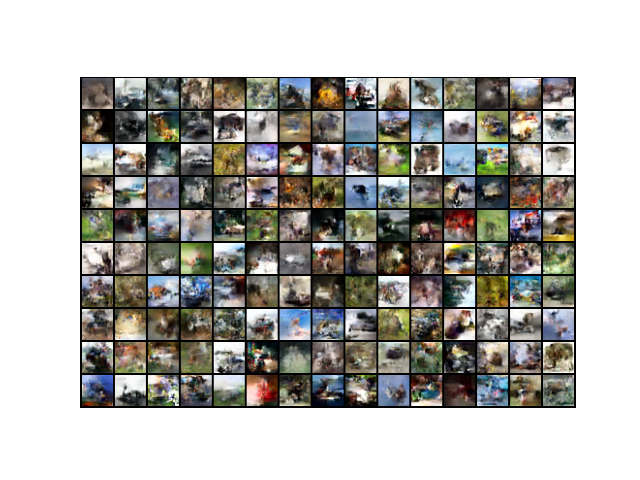}
        \caption{Flowified convnets   with non-overlapping kernels combined with NSF layers on CIFAR-10, BPD 3.69}
    \end{subfigure}
    \hfill
    \begin{subfigure}[b]{0.475\textwidth}  
        \centering 
        \includegraphics[trim=60 60 60 60,clip,width=\textwidth]{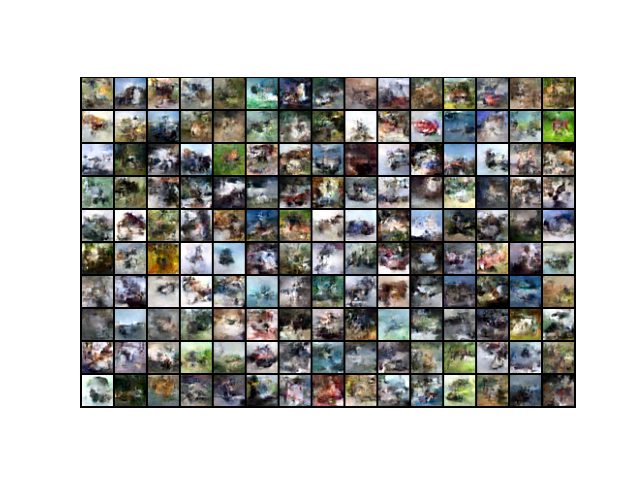}
        \caption{Flowified convnets   with overlapping kernels combined with NSF layers on CIFAR-10, BPD 3.93}
    \end{subfigure}
\end{figure*}
\section{Complete proofs of Theorems \ref{thm:dim_decrease} and \ref{thm:dim_increase} }

\begin{proof}[Proof of Theorem \ref{thm:dim_decrease}]
The missing part of the proof is the calculation showing right invertibility. To prove this, first we let $z \in \mathbb{R}^m$ be arbitrary and then we calculate
\begin{align}
    \left(\mathcal L \circ \mathcal L^{-1}\right)(z) &= \left( V \Sigma U \right)\circ\left(U^T \Sigma^{-1}  V^T\right)(z-b) + b\\
    & = \left( V \left( \Sigma  \left( U  U^T \right)\Sigma^{-1}\right)  V^T\right)(z-b) + b\\
    & = (z-b) + b\\
    &= z.
\end{align}
\end{proof}

\begin{proof}[Proof of Theorem \ref{thm:dim_increase}]
The missing part of the proof is showing that $\mathcal L^{-1} =L_{W,b}^+$ and that the resulting layer is left invertible. To prove $\mathcal L^{-1} =L_{W,b}^+$ we let $z \in \mathbb{R}^m$,
\begin{align}
\mathcal L^{-1}(z) =U^T \Sigma^{-1}  V^T(z-b)
                   = W^+(z-b) 
                   =L_{W,b}^+
\end{align}
Finally, left invertibility is proven by
\begin{align}
    \left(\mathcal L^{-1}\circ \mathcal L\right) (x) &=\left(U^T \Sigma^{-1}  V^T\right)[\left( V \Sigma U \right)(x) + b-b]\\
                                        &=\left(U^T \Sigma^{-1}  V^T\right)[\left( V \Sigma U \right)(x)]\\
                                        &=\left(U^T (\Sigma^{-1}  (V^T V) \Sigma) U \right)(x)\\
                                        &=x
    \end{align}
\end{proof}
\clearpage
\pagebreak
\section{Samples from \textsc{fMLP},  \textsc{fConv1} and  \textsc{fConv2}}
\label{app:mlp_samples}
Below are samples from \textsc{fMLP}(top 2 rows),  \textsc{fConv1}(middle 2 rows) and  \textsc{fConv2}(bottom 2 rows)  trained on MNIST  and CIFAR-$10$ . Samples where the mean is used to invert the SVD in the inverse pass (left). Samples generated by drawing from the inverse density to invert the SVD in the inverse pass (right).

\begin{figure*}[h!]
    \centering
    \begin{subfigure}[b]{0.475\textwidth}
        \centering
        \includegraphics[trim=60 60 60 175,clip,width=\textwidth]{figures/mnist_mlp_mean.png}
        \label{}
    \end{subfigure}
    \hfill
    \begin{subfigure}[b]{0.475\textwidth}  
        \centering 
        \includegraphics[trim=60 60 60 175,clip,width=\textwidth]{figures/mnist_mlp_sample.png}
        \label{}
    \end{subfigure}
    \vskip-\baselineskip
    \begin{subfigure}[b]{0.475\textwidth}   
        \centering 
        \includegraphics[trim=60 60 60 175,clip,width=\textwidth]{figures/cifar_mlp_mean.png}
        \label{}
    \end{subfigure}
    \hfill
    \begin{subfigure}[b]{0.475\textwidth}   
        \centering 
        \includegraphics[trim=60 60 60 175,clip,width=\textwidth]{figures/cifar_mlp_sample.png}
        \label{}
    \end{subfigure}
    \vskip-\baselineskip
    \begin{subfigure}[b]{0.475\textwidth}
        \centering
        \includegraphics[trim=60 60 60 175,clip,width=\textwidth]{figures/mnist_mean.png}
        \label{}
    \end{subfigure}
    \hfill
    \begin{subfigure}[b]{0.475\textwidth}  
        \centering 
        \includegraphics[trim=60 60 60 175,clip,width=\textwidth]{figures/mnist_sample.png}
        \label{}
    \end{subfigure}
    \vskip-\baselineskip
    \begin{subfigure}[b]{0.475\textwidth}   
        \centering 
        \includegraphics[trim=60 60 60 175,clip,width=\textwidth]{figures/cifar_mean.png}
        \label{}
    \end{subfigure}
    \hfill
    \begin{subfigure}[b]{0.475\textwidth}   
        \centering 
        \includegraphics[trim=60 60 60 175,clip,width=\textwidth]{figures/cifar_sample.png}
        \label{}
    \end{subfigure}
    \vskip-\baselineskip
    \begin{subfigure}[b]{0.475\textwidth}
        \centering
        \includegraphics[trim=60 60 60 175,clip,width=\textwidth]{figures/mnist_mean_no_overlap.png}
        \label{}
    \end{subfigure}
    \hfill
    \begin{subfigure}[b]{0.475\textwidth}  
        \centering 
        \includegraphics[trim=60 60 60 175,clip,width=\textwidth]{figures/mnist_sample_no_overlap.png}
        \label{}
    \end{subfigure}
    \vskip-\baselineskip
    \begin{subfigure}[b]{0.475\textwidth}   
        \centering 
        \includegraphics[trim=60 60 60 175,clip,width=\textwidth]{figures/cifar_mean_no_overlap.png}
        \label{}
    \end{subfigure}
    \hfill
    \begin{subfigure}[b]{0.475\textwidth}   
        \centering 
        \includegraphics[trim=60 60 60 175,clip,width=\textwidth]{figures/cifar_sample_no_overlap.png}
        \label{}
    \end{subfigure}
\end{figure*}
\section{Experimental details}
\label{sec:exp_details}

Everything was implemented in PyTorch~\cite{paszke2017automatic} and executed on a RTX $3090$ GPU. 

We also used weights and biases for experiment tracking~\cite{wandb} and the neural spline flows~\cite{nflows} package for the rational quadratic activation functions. Both of these tools are available under an MIT license.

\subsection*{Tabular data}

All training hyper parameters were set to be the same as those used in the neural spline flow experiments~\cite{durkan2019neural}.
All networks in this section are flowified MLPs. Several different architectures were tested with different design choices. Dimension preserving layers of different depth were considered with the number of layers set to $3, 5, 7, 9, 21$. Fixed depth layers were also considered with widths from $256,128,64,32,16$ and depths from $3,5,7$. We also tried hand designed widths with depths from $[128,128,128,64,32]$, $[128,64,32,16,8]$, $[8,16,32,64,128]$, $[32,64,128,128,128]$.
In all cases the second architecture was used, except for BSDS300 where the first architecture was used. Leaky ReLU activations were used with a negative slope of 0.5.
\subsection*{Image data}

The models on CIFAR-$10$ were trained for 1000 epochs with a batch size of 256 and lasted 36-48 hours. The models on MNIST were trained for 200 epochs with the same batch size and lasted approximately 8-12 hours. The learning rate in all experiments was initialized at $5\times 10^{-4}$ and annealed to $0$ following a cosine schedule \cite{cosine_annealing}.

\subsection{Architectures}
\label{app:architectures}

\begin{figure}[h!]
  \lstset{language=Python}
  \lstset{frame=lines}
  
  \begin{lstlisting}
import torch.nn as nn
import flowification as ffn

model = nn.Sequential(
  ffn.Flatten(),
  ffn.Linear(784, 512), RqSpline(),
  ffn.Linear(512, 256), RqSpline(),
  ffn.Linear(256, 128), RqSpline(),
  ffn.Linear(128, 64), RqSpline(),
  ffn.Linear(64, 32), RqSpline(),
  ffn.Linear(32, 8)
    )
  \end{lstlisting}
  \caption{\textsc{fMLP} Architecture for MNIST.}
\end{figure}

\begin{figure}[h!]
  \lstset{language=Python}
  \lstset{frame=lines}
  
  \begin{lstlisting}
import torch.nn as nn
import flowification as ffn

model = nn.Sequential(
  ffn.Flatten(),
  ffn.Linear(3072, 1024), RqSpline(),
  ffn.Linear(1024, 512), RqSpline(),
  ffn.Linear(512, 512), RqSpline(),
  ffn.Linear(512, 256), RqSpline(),
  ffn.Linear(256, 128)
    )
  \end{lstlisting}
  \caption{\textsc{fMLP} Architecture for CIFAR-$10$.}
\end{figure}

\begin{figure}[h!]
  \lstset{language=Python}
  \lstset{frame=lines}
  
  \begin{lstlisting}
import torch.nn as nn
import flowification as ffn

model = nn.Sequential(
  ffn.Conv2d(3, 27, kernel_size=3, stride=2), RqSpline(),  # 16
  ffn.Conv2d(27, 64, kernel_size=3, stride=2), RqSpline(),  # 8
  ffn.Conv2d(64, 100, kernel_size=3, stride=2), RqSpline(),  # 4
  ffn.Conv2d(100, 112, kernel_size=3, stride=2), RqSpline(),  # 2
  ffn.Conv2d(112, 128, kernel_size=2, stride=2), RqSpline(),  # 1
  ffn.Flatten(),
  ffn.Linear(128, 128), RqSpline(),
  ffn.Linear(128, 128), RqSpline(),
  ffn.Linear(128, 128), RqSpline(),
  ffn.Linear(128, 128), RqSpline(),
  ffn.Linear(128, 128), RqSpline(),
  ffn.Linear(128, 128), RqSpline(),
  ffn.Linear(128, 128), RqSpline(),
  ffn.Linear(128, 128), RqSpline(),
  ffn.Linear(128, 128), RqSpline(),
  ffn.Linear(128, 128), RqSpline()
    )
  \end{lstlisting}
  \caption{\textsc{fConv1} Architecture for CIFAR-$10$.}
\end{figure}

\begin{figure}[h!]
  \lstset{language=Python}
  \lstset{frame=lines}
  
  \begin{lstlisting}
import torch.nn as nn
import flowification as ffn

model = nn.Sequential(
  ffn.Conv2d(1, 16, kernel_size=3, stride=2), RqSpline(),  # 14
  ffn.Conv2d(16, 24, kernel_size=2, stride=2), RqSpline(),  # 7
  ffn.Conv2d(24, 32, kernel_size=3, stride=2), RqSpline(),  # 3
  ffn.Conv2d(32, 48, kernel_size=2, stride=1), RqSpline(),  # 2
  ffn.Conv2d(48, 64, kernel_size=2, stride=1), RqSpline(),  # 1
  ffn.Flatten(),
  ffn.Linear(64, 64), RqSpline(), 
  ffn.Linear(64, 64), RqSpline(),
  ffn.Linear(64, 64), RqSpline(),
  ffn.Linear(64, 64), RqSpline(),
  ffn.Linear(64, 64), RqSpline(),
  ffn.Linear(64, 64), RqSpline(),
  ffn.Linear(64, 32), RqSpline(),
  ffn.Linear(32, 32), RqSpline(),
  ffn.Linear(32, 32), RqSpline(),
  ffn.Linear(32, 32), RqSpline(),
  ffn.Linear(32, 32), RqSpline(),
  ffn.Linear(32, 32), RqSpline(),
  ffn.Linear(32, 32), RqSpline(),
  ffn.Linear(32, 8), RqSpline()
    )
  \end{lstlisting}
  \caption{\textsc{fConv1} Architecture for MNIST.}
\end{figure}

\clearpage
\pagebreak
\section{Parametrizing rotations via the Lie algebra $\mathfrak{so}(d)$}
\label{app:rotations}
The Lie algebra $\mathfrak {so}(d)$ of the rotation group $SO(d)$ is the space of $d \times d$ skew-symmetric matrices
$$\mathfrak{so}(d)=\{X \in M_d(\mathbb{R})|X=-X^T\}$$
Moreover, since $SO(d)$ is connected and compact, the (matrix-)exponential map $\mathfrak {so}(d) \rightarrow SO(d)$ is surjective \citep[Corollary 11.10]{hall2015lie}, i.e. every rotation $U \in SO(d)$ can be written as the exponential of some skew-symmetric $X \in \mathfrak {so}(d)$.

Motivated by this result, we propose to parametrize the Lie algebra $\mathfrak{so}(d)$, and use the exponential map to obtain rotations. Since the exponential map is differentiable, we can propagate gradients all the way back to the Lie algebra and do gradient based optimization there. 

Note that is significantly eases the optimization process. The reason for this is that the Lie group of unitary transformations has non-trivial geometry and doing gradient descent is cumbersome on such spaces. On the other hand, the Lie algebra is a vector space, where gradient descent can shine.

Since PyTorch \citep{paszke2017automatic} has built-in support  for  the matrix exponential, random orthogonal transformations can be generated with just a few lines of Python code:

\begin{figure}[h!]
\lstset{language=Python}
\lstset{frame=lines}
\begin{lstlisting}
d  = 5                          # size of matrix
x  = torch.randn(d,d)           # random (d,d)  matrix
S  = x - x.T()                  # S is skew-symmetric
U  = torch.matrix_exp(S)        # U  is a rotation
\end{lstlisting}
\caption{Generating a random rotation in PyTorch}
\end{figure}

\paragraph*{Comparison with Householder transformations}
The matrix exponential has the advantage of generating the whole rotation group, while many Householder transformations need to be composed to have the same level of expressivity. This, of course, comes with an increased ($\mathcal O(d^2)$) computational cost. We did not explore Householder transformations, but we expect that for linear layers with large input or output dimensionality, the matrix exponential will simply become too expensive and it will become necessary to rely on Householder transformations.

\section{Flowifying convolutions through the FFT}
\label{app:fft}
An alternative way of flowifing convolutional layers is through the convolution theorem which states that the Fourier-transformation diagonalizes convolutions. In this section we sketch the idea and discuss the issues with it.

Fig.\ref{fig:fft_conv} displays the Pytorch code for performing convolutions through the convolution theorem. Line 12 of the code shows that in the Fourier basis, we can perform convolution as follows; at each frequency we form the vector of dimension (\# of in\_channels) and apply a weight matrix of size (\# of in\_channels) $\times$ (\# of out\_channels) to it. If this weight matrix is parametrized by the SVD, we can guarantee its invertibility. Since the FFT is also invertible, this construction should be a flowification of the convolutional layer if we can also solve the following two issues:

If the weight matrix is parametrized in frequency domain,
\begin{itemize}
    \item[(1)]  it's inverse transform might not be real.
    \item[(2)]  it's inverse transform might not be of the appropriate size (i.e. nonzero outside of the first (kernel\_size)-many pixels).
\end{itemize}

The first of these is easy to solve with the symmetric parametrization induced by
$$ k \textnormal{ is real} \Leftrightarrow FFT(k)^*_i=FFT(k)_{N-i}$$
where $N$ is the length of the signal. To deal with the second condition, a auxiliary loss term is introduced that minimizes the terms that should be 0 . Although the idea of FFT seems promising, this approach turns out be slower (probably due to the FFT), worse performing and more cumbersome than the flowificaton presented in \S \ref{sec:conv_layers}. Fig.~\ref{fig:fft_conv_samples} shows some samples from a Fourier-flowified convolutional network trained on MNIST.

\begin{figure}[h!]
    \lstset{language=Python}
    \lstset{frame=lines}
    
    \begin{lstlisting}
        in_channels, out_channels = 4, 3
        signal_length, kernel_size = 32, 5
        batch_size = 24
        
        K = torch.randn(out_channels, in_channels, kernel_size)
        x = torch.randn(batch_size, in_channels, signal_length)
        z1 = torch.nn.functional.conv1d(input=x, weight=K)
        
        x_ = torch.fft.fft(x, norm='ortho')
        # pad kernel to match the length of x
        K_ = torch.fft.fft(K.flip(-1), n=x.size()[-1], norm='ortho')
        z_ = torch.einsum('bix,jix->bjx', (x_, K_)) 
        z2 = torch.fft.ifft(z_, norm='forward')
        z2 = z2[:, :, (kernel_size-1):]
        
        print((z1-z2).abs().max().item())
        # tensor(2.8610e-06)
    \end{lstlisting}
    \caption{1D convolutions in PyTorch}
    \label{fig:fft_conv}
\end{figure}

\begin{figure}[h!]
    \centering
    \includegraphics[width=\textwidth]{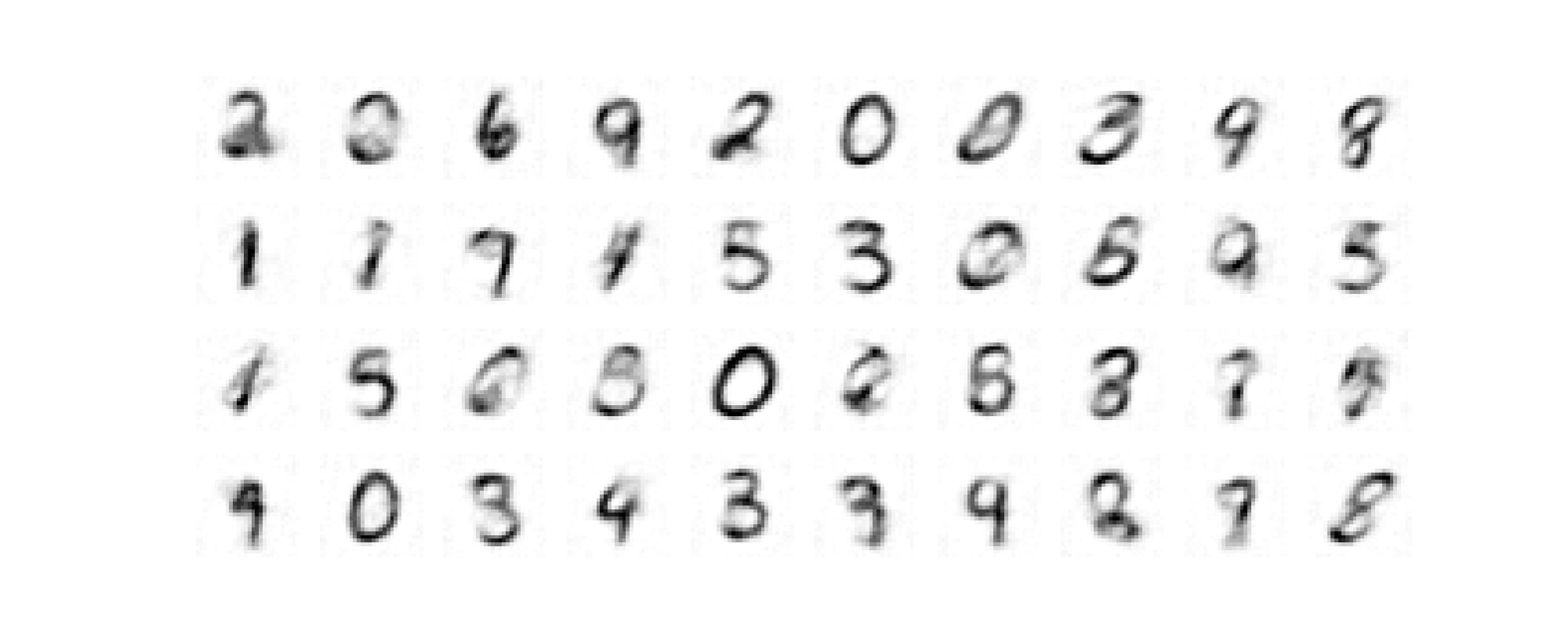}
    \caption{Samples from and Fourier-flowified convolutional network}
    \label{fig:fft_conv_samples}
\end{figure}
\clearpage
\pagebreak

\end{document}